\newtheorem{definition}{Definition}
\newtheorem{lemma}{Lemma}
\newtheorem{theorem}{Theorem}
\providecommand{\algorithmname}{Algorithm}
\icmltitlerunning{Loss-Proportional Subsampling for Subsequent ERM}
\begin{document}

\twocolumn[
\icmltitle{Loss-Proportional Subsampling for Subsequent ERM}

\icmlauthor{Paul Mineiro}{pmineiro@microsoft.com}
\icmladdress{Microsoft Cloud and Information Services Laboratory,
            One Microsoft Way, Redmond, WA 98052, USA}
\icmlauthor{Nikos Karampatziakis}{nikosk@microsoft.com}
\icmladdress{Microsoft Cloud and Information Services Laboratory,
            One Microsoft Way, Redmond, WA 98052, USA}

\icmlkeywords{subsampling, large-scale learning, empirical Bernstein}

\vskip 0.3in
]

\begin{abstract}
We propose a sampling scheme suitable for reducing a data set prior to
selecting a hypothesis with minimum empirical risk.  The sampling only
considers a subset of the ultimate (unknown) hypothesis set, but can
nonetheless guarantee that the final excess risk will compare favorably
with utilizing the entire original data set. We demonstrate the practical
benefits of our approach on a large dataset which we subsample and
subsequently fit with boosted trees.
%
\end{abstract}

\section{Introduction}

Data volumes are growing at a faster rate than available computing power,
storage space, or network bandwidth.  This has fueled interest in
distributed approaches to machine learning. However, the substantial jump
in communication costs between a multicore and a multicomputer system
currently confines many popular techniques to the single computer regime.
Consequently, even those machine learning workflows that today originate
with distributed data in a clustered environment often terminate with
the learning problem being solved on a single machine.  Furthermore,
because of computation, storage, or network limitations there is often
a subsampling step between the data store and the single machine.

Our concern here is to make the subsampling step as statistically
efficient as possible, knowing that the data will subsequently be given
to an empirical risk minimization (ERM) algorithm.  What properties can
we hope to have from a subsample of the original data?  If we
were to perform ERM on the original data, the excess risk would be
$O (1 / \sqrt{n})$, where $n$ is the size of the original data set.
If we uniformly subsampled the original data set to size $m$ and 
performed ERM on this subsample, the excess risk would be $O
(1 / \sqrt{m})$.  From the perspective of ERM this compression is lossy.
Ideally, we would like to reduce the data set size while still retaining
excess risk $O (1 / \sqrt{n})$.

At first blush, this appears to be the active learning scenario. However,
the subsampling strategy has access to all examples and labels but cannot assume
knowledge of the hypothesis set ultimately used for ERM.  This is
because the types of hypotheses ultimately considered are presumed to
be intractable on the full dataset.  Instead we will only assume
access to a subset of the (otherwise unknown) hypothesis set.  As an example, 
suppose that the training on the subsample will be via neural networks with an
unknown architecture, but known to have direct connections from the input
layer to the output layer.  In this case, the set of linear predictors is
a subset of the final hypothesis space, and linear learning is feasible
at terafeature scale~\cite{Agarwal11}. Our results show we can compress
the data set by encoding relative to the mistakes of a linear predictor,
without distorting the subsequent ERM over neural networks; the amount
of compression possible is limited by the quality of the linear predictor.

In effect many current workflows look like 1) a subsampling step followed
by 2) a model selection step.  We propose to replace the first step by
1a) a simpler model selection step, followed by 1b) a subsampling step.
Although our approach does apply recursively, in practice we believe
much of the benefit would be captured by the introduction of a single
simple model selection step prior to subsampling.

\subsection{Relation to Prior Work}

This work bears strong resemblance to multiple threads of research,
and the main contribution is interpreting previous understanding 
and practices through the lens of empirical Bernstein bounds~\cite{MaurerP09}.

Boosting algorithms have popularized the idea of sequential model
selection via importance weighting.  Of particular relevance is
FilterBoost~\cite{NIPS2007_60}, which leverages the correspondence
between importance-weighting and rejection sampling in the large data
setting: the large data set is iteratively subjected to a weak learner
on a manageable subsample.  The scheme herein is akin to a degenerate
two-stage version of FilterBoost, but the differences are important.
Theoretically, the second stage of our procedure is on an unknown
superset of hypotheses. This mandates enforcing a minimum sampling
probability related to the quality of the initial model.  Practically,
in typical workflows there really is a large fixed dataset from which a
single subsample is extracted and subsequently used for (multiple!) model
selection experiments. This commonplace scenario motivates the study of
this particular two-stage procedure.

The architecture of cascading classifiers with increasing computational
complexity to achieve an efficient ensemble was popularized
by Viola and Jones~\cite{Viola01}. More recently, the FCBoost
algorithm~\cite{NIPS2010_0646} was introduced, which implements fully
automatic cascade design within a boosting framework.  While similar
architecturally, cascades focus on short-circuiting a sequential chain of
classifiers in order to minimize evaluation time complexity, whereas
the primary concern here is reduction of training set size. This is
necessitated by the superlinear scaling in training time complexity of
many popular methods such as decision trees and Gaussian processes.
Nonetheless the procedure outlined herein can be considered a simple
two-stage cascade, leading to the same important differences as
highlighted in the previous paragraph.

Active learning is concerned with achieving good generalization
while limiting the number of labels revealed to the learner, and our
results here are clearly related.  In particular our subsampling
rate is lower-bounded similarly to worst-case label complexity
results~\cite{Beygelzimer08}.  It is tempting to conclude that more
sophisticated active learning approaches would achieve lower subsampling
rates, but again the fact that the second stage of our procedure is
on an unknown superset of hypotheses is important.  In particular,
since disagreement regions \cite{hanneke2007bound} only increase on
the hypothesis superset, for our particular scenario an active learning 
algorithm which attempts to exploit what appears to be an isolated empirical 
minimizer is subject to poor worst-case behaviour.

Interesting connections exist between this work and
research directions in sample compression and Minimum Description 
Length (MDL). Sample compression algorithms \cite{floyd1995sample}
learn classifiers that can be described by only a small fraction of the
training data. Algorithms based on the MDL
principle \cite{grunwald2007minimum} treat training examples and models 
as data that need to be transmitted to a receiver and they operate by 
conceptually finding a code that minimizes communication costs.
However, both sample compression and MDL are aware of the subsequent
steps in the protocol they are operating. In sample compression the 
reconstruction function has to match the compression function and in
MDL the code used to communicate has to be known to the receiver. 
In our work, the subsampling step is oblivious to the subsequent 
ERM step which provides great flexibility in practical applications.

Subsampling to mitigate computational constraints during training is
an old idea and a common practice in the machine learning community.
The exact setup considered here was investigated empirically almost 2
decades ago~\cite{Lewis94}, where a simpler computationally inexpensive
hypothesis was used to select importance-weighted training data for a
more expressive computationally intensive hypothesis.  More recent work
from neural language modeling~\cite{Bengio2008} indicates the issue
of controlling worst-case subsample deviations remains open.  In that
work, the authors address the issue by adapting the compressing hypothesis
to match the empirical minimizer (both of which are learned online).
The approach is this paper is much simpler: we enforce a minimum
sampling probability to upper-bound worst-case empirical variance.

\subsection{Contributions}

We define an optimal subsample selection problem given a compressing
hypothesis using empirical Bernstein bounds.  The solution is a 
simple strategy parameterized by the overall subsample budget.  We prove
a bound on deviations between the subsample empirical risk minimizer and the 
true risk minimizer which compares favorably to ERM on the original sample.

We demonstrate the effectiveness by achieving competitive results on a 
large public dataset for which naive subsampling techniques are not an
effective strategy.

\section{Derivation of the Sampling Strategy}

Our starting point is the old practitioner's chestnut: when faced with a
binary classification problem with a highly unbalanced label distribution,
discard examples associated with the more frequent label until the
relative number of examples with each label is about even.  Remaining
examples, associated with the formally more frequent class label, must
be importance weighted to retain an unbiased sample. Curiously, for
logistic regression this can be done analytically by adjusting the bias
weight after training, and in language modeling this results in large
training time speedups without significant degradation of generalization
performance~\cite{Xu_2011}.

Although this practice is widespread and intuitively reasonable, our
goal is a satisfactory theoretical explanation of the approach which
in turn suggests how to improve the technique. Note that since we can
leverage the labels of the entire dataset, our setup does not obviously
correspond to active learning.  We believe a thorough understanding
requires a non-uniform view of the hypothesis space, and in particular,
our results leverage empirical Bernstein bounds.

A key observation is that subsampling the more frequent class exactly
preserves the empirical 0-1 loss of the best constant hypothesis, because
the discarded points have a loss of 0.  In fact, if the only purpose of
the subsample was to transmit the empirical risk of the best constant
hypothesis, all instances associated with the more frequent class could
be discarded.  However, the subsample will be used for empirical risk
minimization.  By definition, the empirical minimizer on the subsample
will have empirical risk on the subsample at least as good as the best
constant hypothesis. However, the deviation between the empirical risk on
the subsample and the true risk might be large. Fortunately, retaining
the instances associated with the more frequent class has the effect of
bounding the worst-case empirical variance of the loss of the subsample
empirical risk minimizer. This, together with an application of empirical
Bernstein bounds, indicates that the deviation between subsample risk
and true risk is small.

\subsection{A Compressing Hypothesis}

One way to generalize frequent label subsampling is to let the set of initial
predictors considered be richer than the constant predictors.
For instance, the class labels might be approximately balanced. Yet,
when conditioned on a single feature, the class label distribution might
be somewhat imbalanced.  In general, we might be able to easily search
at scale over simple hypothesis spaces prior to subsampling for model
selection: can we take advantage of this?

Consider any hypothesis $\tilde h$ which is guaranteed to be in the set of
hypotheses for the final ERM step.  For now, let us assume the subsampling
procedure does not distort the empirical risk of $\tilde h$. Then, we
can upper bound the subsample empirical risk of the subsample empirical
risk minimizer, which in turn will allow us to bound deviations of the
subsample minimizer from the true underlying distribution.

Formally, consider that we have an i.i.d. empirical sample $\mathbf{X}
= (X_1, \ldots, X_n)$ of size $n$.  With a slight abuse of notation
in what follows, we will consider the loss function fixed and we will
not distinguish between a hypothesis $h$ and the induced loss function
$\ell_h$. Therefore, we do not need to distinguish between features
and labels.  Empirical risk minimization on the original sample would
be driven by the risk
\begin{displaymath}
R_{\mathbf{X}} (h) = \frac{1}{n} \sum_{i=1}^n h (X_i).
\end{displaymath}
Given $h$, our subsampling strategy makes conditionally independent decisions
to sample each $X_i$, where $Q_i \in \{ 0, 1 \}$ is a random
variable indicating whether or not $X_i$ is included in the subsample,
and $P_i = \mathbb{E}[Q_i | \mathbf{X}]$ is the sampling probability.
The final ERM step minimizes importance-weighted empirical risk on the resulting subsample:
\begin{displaymath}
R_{\mathbf{Q}, \mathbf{X}} (h) = \frac{1}{n} \sum_{i=1}^n \frac{Q_i}{P_i} h (X_i).
\end{displaymath}
We want to limit the degradation introduced by subsampling, i.e., bound
deviations between $R_{\mathbf{X}} (h)$ and $R_{\mathbf{Q}, \mathbf{X}}
(h)$.  The empirical Bernstein bound \cite{MaurerP09} suggests that
deviations are driven by the subsample empirical variance
\begin{displaymath}
\begin{aligned}
&\mathbb{V}_n (h | \mathbf{Q},\mathbf{X}) \\
&\;\;\;= \frac{1}{n-1} \sum_{i=1}^n \left(\frac{Q_i}{P_i} h (X_i) - \left( \frac{1}{n} \sum_{i=1}^n h (X_i) \right) \right)^2.
\end{aligned}
\end{displaymath}
It turns out the worst case scenario is when $h$ has high loss on examples
where $P_i$ is small.   For any distribution $D$ and any random variable
$Z \in [0, w]$ we have $\mathbb{V}_D[Z] \leq w \mathbb{E}_D[Z]$. For
the subsample empirical distribution, in particular, $\mathbb{V}_n (h |
\mathbf{Q},\mathbf{X}) \leq 1/(\min_i P_i) R_{\mathbf{Q}, \mathbf{X}}
(h)$.  Thus, we can bound the worst-case subsample empirical variance of
$R_{\mathbf{Q}, \mathbf{X}} (h)$ by enforcing a minimum sampling probability $P_{\min}$. 
If we knew that the subsample empirical minimizer $\hat h$ had subsample
empirical risk $R_{\mathbf{Q}, \mathbf{X}} (\hat h)$, we could choose
$P_{\min} = R_{\mathbf{Q}, \mathbf{X}} (\hat h)$. This choice guarantees
that deviations introduced by subsampling are of the same order as
deviations in the original sample.  Unfortunately we cannot choose
$P_{\min}$ in this fashion as it involves circular reasoning.

Instead we can leverage the compressing hypothesis $\tilde h$ to choose
$P_{\min}$.  In particular, if the subsampling procedure does not distort
the empirical risk of $\tilde h$, then $R_{\mathbf{Q}, \mathbf{X}} (\hat
h) \leq R_{\mathbf{Q}, \mathbf{X}} (\tilde h) \approx R_{\mathbf{X}}
(\tilde h)$, so we can set $P_{\min} = R_{\mathbf{X}} (\tilde h)$.
This indicates bounding the deviation of $\tilde h$ between sample and subsample is critical.  
For $\tilde h$ we can use Bennett's inequality to bound the deviation
introduced by subsampling via the variance of the subsampling procedure,
\begin{displaymath}
\begin{aligned}
&\mathbb{V}_{\mathbf{Q}} (\tilde h | \mathbf{X}) \\
&= \mathbb{E}_{\mathbf{Q}}\left[ \frac{1}{n} \sum_{i=1}^n \left(\frac{Q_i}{P_i} \tilde h (X_i) - \mathbb{E}_{\mathbf{Q}}\left[\frac{1}{n} \sum_{i=1}^n \frac{Q_i}{P_i} \tilde h (X_i) \right] \right)^2 \biggl| \mathbf{X} \right] \\
&= \frac{1}{n} \sum_{i=1}^n \left(\frac{1}{P_i} - 1 \right) \tilde h (X_i)^2.
\end{aligned}
\end{displaymath}

The above considerations motivate the following formulation of optimal
subsampling,
\begin{displaymath}
\begin{aligned}
\min_{\{P_i\}} \frac{1}{n} &\sum P_i \\
\mbox{s.t.} \\
\frac{1}{n} \sum_{i=1}^n &\left(\frac{1}{P_i} - 1\right) \tilde h (X_i)^2 \leq V, \\
\forall i: P_i &\geq P_{\min}.
\end{aligned}
\end{displaymath}
The KKT conditions reveal $P_i = \max\{ P_{\min}, \lambda \tilde h
(X_i) \}$, where $\lambda$ depends upon both the variance budget $V$
and the minimum probability $P_{\min}$.  Thus we will be sampling at a
rate proportional to the instantaneous loss of compressing hypothesis,
subject to a minimum sampling rate.

\subsection{The Sampling Strategy}

The above considerations lead to the following.

\begin{definition}[Sampling Strategy]\label{def:subsample}
Fix a sample $\mathbf{X} = (X_1, \ldots, X_n) \in \mathcal{X}^n$,
let $\lambda > 0$, let $P_{min} > 0$, and let $\tilde h : \mathcal{X} \to [ 0, 1 ]$ be any
hypothesis.  The sampling strategy wrt $(\tilde h, \lambda, P_{min})$ is a set of random 
variables $\mathbf{Q} = (Q_1, \ldots, Q_n)$ that defines a subsample of $\mathbf{X}$,
where the $Q_i \in \{ 0, 1 \}$ have conditional independence $Q_i \perp Q_{j
\neq i}, X_{j \neq i} | X_i$ and conditional expectation
\begin{displaymath}
P_i \doteq \mathbb{E}[Q_i | X_i] = \min \left\{1, \max \left\{ P_{min}, \lambda \tilde h (X_i) \right\} \right\}.
\end{displaymath}
\end{definition}
For this strategy we can prove the following.

\begin{theorem}\label{thm:one}
Let $X$ be a random variable with values in set $\mathcal{X}$ with
distribution $D$, let $\mathbf{X} = (X_1, \ldots, X_n) \sim D^n$ be an
i.i.d. empirical sample of size $n$, let $\mathcal{H}$ be a finite set of
hypotheses $h : \mathcal{X} \to [ 0, 1 ]$, let $\tilde h \in \mathcal{H}$
be any hypothesis with empirical mean $R_{\mathbf{X}} (\tilde h)$,
and let $\mathbf{Q} = (Q_1, \ldots, Q_n)$ be a set of random variables
according to the sampling strategy wrt $(\tilde h, \lambda, P_{min})$.
Let $h^* \in \mathcal{H}$ be any hypothesis with minimum true mean, and
let $\hat h \in \mathcal{H}$ be any hypothesis with minimum subsampled
empirical mean $R_{\mathbf{Q}, \mathbf{X}} (h)$.
For $\delta > 0$, $n \geq 2$ we have with probability at least $1 -
3 \delta$ in $\mathbf{Q}$ and $\mathbf{X}$,
\begin{displaymath}
\begin{aligned}
&\mathbb{E}_{D}[\hat h (X)] \leq \mathbb{E}_{D}[h^* (X)] \\
&+ \left(2 + \sqrt{\frac{R_{\mathbf{X}} (\tilde h)}{P_{min}}}\right)  \sqrt{\frac{2 \ln (\vert \mathcal{H} \vert / \delta)}{n}} \\
&+ \left( \sqrt[4]{\frac{R_{\mathbf{X}} (\tilde h) P_{min}}{\lambda}} +  \frac{2}{3}\right) \left( \frac{2 \ln (\vert \mathcal{H} \vert / \delta)}{P_{min} n} \right)^{3/4} \\
&+ 4 \frac{\ln (\vert \mathcal{H} \vert / \delta)}{P_{min} (n - 1)}.
\end{aligned}
\end{displaymath}
\begin{proof} See the appendix.
\end{proof}
\end{theorem}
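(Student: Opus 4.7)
The plan is to decompose the excess risk by telescoping through the full-sample empirical risk $R_{\mathbf{X}}$ and the subsampled empirical risk $R_{\mathbf{Q},\mathbf{X}}$, and to control each link with a Bernstein-type inequality. Concretely, I would write
\begin{align*}
\mathbb{E}_D[\hat h(X)] - \mathbb{E}_D[h^*(X)] &= \big(\mathbb{E}_D[\hat h] - R_{\mathbf{X}}(\hat h)\big) + \big(R_{\mathbf{X}}(\hat h) - R_{\mathbf{Q},\mathbf{X}}(\hat h)\big) \\
&\quad {}+ \big(R_{\mathbf{Q},\mathbf{X}}(\hat h) - R_{\mathbf{Q},\mathbf{X}}(h^*)\big) + \big(R_{\mathbf{Q},\mathbf{X}}(h^*) - R_{\mathbf{X}}(h^*)\big) + \big(R_{\mathbf{X}}(h^*) - \mathbb{E}_D[h^*]\big),
\end{align*}
drop the middle summand using the subsample ERM optimality $R_{\mathbf{Q},\mathbf{X}}(\hat h) \leq R_{\mathbf{Q},\mathbf{X}}(h^*)$, and control each of the remaining four. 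Writing $\epsilon = \ln(|\mathcal{H}|/\delta)$, a total failure budget of $3\delta$ is split across three high-probability events, one per Bernstein application.

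Those three ingredients are: (i) empirical Bernstein on the i.i.d.\ sample $h(X_1),\dots,h(X_n)$ union-bounded over $\mathcal{H}$, using $\mathbb{V}_D[h] \leq \mathbb{E}_D[h]$ since $h \in [0,1]$; (ii) empirical Bernstein conditional on $\mathbf{X}$ for the independent summands $Q_i h(X_i)/P_i$ union-bounded over $\mathcal{H}$, using range $1/P_{\min}$ and the variance--mean bound $\mathbb{V}_{\mathbf{Q}}(h\mid \mathbf{X}) \leq R_{\mathbf{X}}(h)/(P_{\min} n)$; and (iii) a single-hypothesis Bernstein applied only to $\tilde h$, exploiting the sharper bound $\mathbb{V}_{\mathbf{Q}}(\tilde h \mid \mathbf{X}) \leq R_{\mathbf{X}}(\tilde h)/(\lambda n)$ that follows from $P_i \geq \lambda \tilde h(X_i)$, and therefore delivering a much smaller deviation $|R_{\mathbf{Q},\mathbf{X}}(\tilde h) - R_{\mathbf{X}}(\tilde h)| \lesssim \sqrt{R_{\mathbf{X}}(\tilde h)\epsilon/(\lambda n)}$ (plus a range remainder). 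The heart of the argument is then replacing the unobserved variances in (i) and (ii) by proxies involving only $R_{\mathbf{X}}(\tilde h)$. For the $h^*$ terms, true-risk optimality gives $\mathbb{E}_D[h^*] \leq \mathbb{E}_D[\tilde h]$, which is close to $R_{\mathbf{X}}(\tilde h)$ by Bernstein on $\tilde h$, producing the coefficient $\sqrt{R_{\mathbf{X}}(\tilde h)/P_{\min}}$ on $\sqrt{2\epsilon/n}$. For the $\hat h$ term in (ii), subsample-ERM optimality gives $R_{\mathbf{Q},\mathbf{X}}(\hat h) \leq R_{\mathbf{Q},\mathbf{X}}(\tilde h)$, and ingredient (iii) turns this into $R_{\mathbf{X}}(\hat h) \lesssim R_{\mathbf{X}}(\tilde h) + \sqrt{R_{\mathbf{X}}(\tilde h)\epsilon/(\lambda n)}$.

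Inserting the latter bound into the variance slot of (ii) and splitting via $\sqrt{a+b} \leq \sqrt{a}+\sqrt{b}$ cleanly isolates the leading $\sqrt{R_{\mathbf{X}}(\tilde h)/P_{\min}}\,\sqrt{2\epsilon/n}$ contribution from the fourth-root correction $\sqrt[4]{R_{\mathbf{X}}(\tilde h)P_{\min}/\lambda}\,(2\epsilon/(P_{\min}n))^{3/4}$ that occupies the third line of the stated bound; the additive $2/3$ and the final $4\epsilon/(P_{\min}(n-1))$ term come from the Maurer--Pontil range remainders aggregated across the three applications. The main obstacle is arithmetic rather than conceptual: composing a Bernstein bound inside another Bernstein's variance is what produces the unusual $3/4$-power term, and extracting it in the form above while preserving the exact constants $(2,\,2/3,\,4)$ and the $n-1$ denominator requires careful $\sqrt{a+b}$ splitting and patient bookkeeping of the lower-order pieces that each ingredient throws off.
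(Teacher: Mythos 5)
Your proposal follows essentially the same route as the paper's appendix proof: the identical five-term decomposition with subsample-ERM optimality dropping the middle term, a sharp Bennett bound on $\tilde h$ alone exploiting $P_i \geq \lambda \tilde h(X_i)$, an empirical Bernstein bound for $\hat h$ whose variance proxy is obtained by chaining $R_{\mathbf{Q},\mathbf{X}}(\hat h) \leq R_{\mathbf{Q},\mathbf{X}}(\tilde h)$ through that first bound (which is exactly what produces the $3/4$-power term), and a Bennett bound for $h^*$ whose variance is controlled via its true optimality relative to $\tilde h$. The only cosmetic differences are that the paper uses plain Hoeffding rather than empirical Bernstein for the two sample-versus-population links, and that it bounds the \emph{empirical} subsample variance by $R_{\mathbf{Q},\mathbf{X}}(\hat h)/P_{\min}$ rather than the normalized conditional-variance form you wrote; neither changes the structure of the argument.
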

Analogous results are possible for infinite hypothesis classes whose
complexity can be suitably controlled.

From Theorem \ref{thm:one} it is clear that our scheme cannot subsample
at a rate below the average loss of the compressing hypothesis without
incurring increasing excess risk; this is analogous to a lossless
compression rate threshold.  However if $P_{min} \geq R_{\mathbf{X}}
(\tilde h)$ and $\lambda \geq 1$, then excess risk is $O (1 / \sqrt{n})
+ O (1 / m^{3/4})$, where $n$ is the original data set size and $m$
is a lower bound on the subsampled data set size.

In practice $P_{min}$ and $\lambda$ are chosen according to the subsample
budget, since the expected size of the subsample is upper bounded by $(P_{min}
+ \lambda R_{\mathbf{X}} (\tilde h)) n$.  Unfortunately there are two 
hyperparameters and the analysis presented here does not guide
the choice except for suggesting the constraints 
$P_{min} \geq R_{\mathbf{X}} (\tilde h)$ and $\lambda \geq 1$; this 
is a subject for future investigation.

For binary classification 0-1 loss, using the best constant predictor as
the compressing hypothesis, $P_{min} = R_{\mathbf{X}} (\tilde h)$, and
$\lambda = 1$, the strategy reduces to the familiar ``subsample instances
with the rarer class label in order to make a balanced data set.''

\section{Experiments}

\begin{figure}
\includegraphics[height=2in]{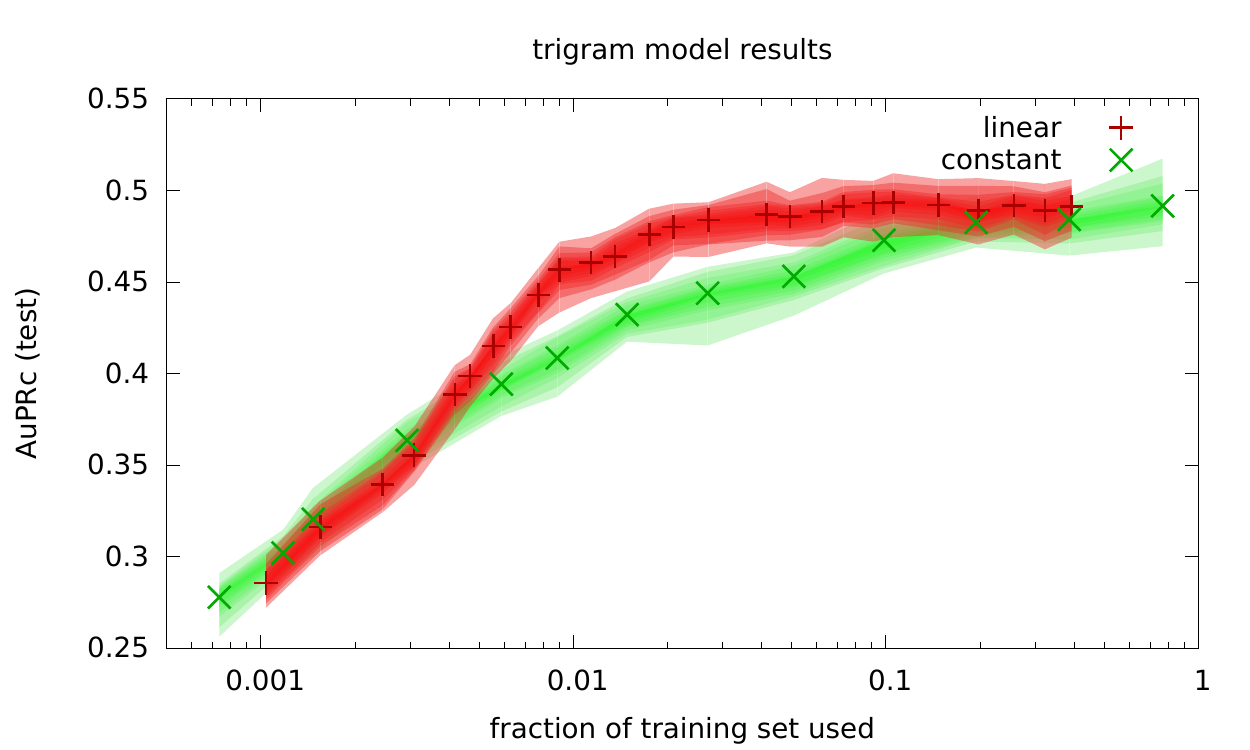}
\vspace{-16pt}
\caption{Test AuPRc for a trigram logistic regression as a function of
subsample fraction and subsample method.  The colored areas around the
data points are 90\% confidence intervals obtained from the bootstrap
distribution of the test set for a fixed predictor.}
\vspace{-16pt}
\label{fig:dnaplot}
\end{figure}

To demonstrate the technique we used the DNA dataset from the 2008 Pascal
Large Scale Learning challenge~\cite{pascallarge}.   This dataset consists
of 50 million instances of 200 base pair oligonucleotides with associated
binary labels corresponding to whether or not the sequence contains
a splice site.  This is a highly imbalanced data set, with 144,823
positives and 49,855,177 negatives.  This dataset is notable because it is a large public data set for
which subsampling has not heretofore been an effective learning
strategy~\cite{SonnenburgFranc2010,Agarwal11}. 

The conventional evaluation metric for this data set is area under
the precision-recall curve.  AuPRCs of circa 0.2 are typical of
``fast'' methods for this dataset, although the best known technique
for this dataset achieves an AuPRc of 0.586 on the validation
set~\cite{SonnenburgFranc2010}.  The labels for the validation set for
this dataset are not published, and are accessible only via a submission
oracle.  We took the original published training set and split it
into training and test sets by reserving the first 1 million instances
as test.   Unless otherwise indicated, we utilize our train/test split
and the reported metrics are not directly comparable with other published
results.  To assess the sensitivity of our results to the exact test set,
we use the bootstrap to estimate the dispersion in the AuPRc. We generate
bootstrap samples of the test set and compute the AuPRc statistic on
each bootstrap sample using the same predictor. In what follows, a 90\%
confidence interval refers to the $5^{\mathrm{th}}$ and $95^{\mathrm{th}}$
quantile of the distribution of AuPRc values obtained this way.

\subsection{Trigram final model}

For our initial (compressing) model we used logistic regression as implemented in
{\tt Vowpal Wabbit}~\cite{vowpalwabbit}, encoding the nucleotide at
each position with a one-hot encoding. This model achieves 0.215 test AuPRc.  

To generate a subsample, we used the initial model to subsample the
original data set as per definition \ref{def:subsample} with $P_{min} =
R_{\mathbf{X}} (\tilde h)$ and for a range of $\lambda$ from 1 to 65536
exponentially spaced; we name this subsampling method \emph{linear}.
For the loss function we used logistic loss, normalized on the training
set to be in the range $[0,1]$.  We compared this to the well-known
and ubiquitously applied strategy of taking all the positively labelled
instances plus a uniform sample of the negatively labelled instances;
we name this subsampling method \emph{constant}.

For our final model we again used logistic regression but included
one-hot encodings of bigrams and trigrams at each position.  Figure
\ref{fig:dnaplot} shows the results of training a trigram model on the
subsample as a function of subsample fraction and subsampling method.
Only the training set is subsampled: the complete test set is used
every time for evaluation.  For the range of subsample fractions roughly
between 1\% and 10\%, the subsample generated via \emph{linear} results
in better test performance; performance at other subsampling rates
is essentially equivalent.  At a 7\% fraction, \emph{linear} achieves
a test AuPRc of 0.491 with 90\% confidence interval $[0.474,0.506]$.
This is equivalent to training a trigram logistic regression on the
entire data set, which achieves test AuPRc of 0.494 with 90\% confidence
interval $[0.475, 0.511]$, as summarized in table \ref{tab:trigramdna}.

\begin{table}
\begin{tabular}{|c|c|c|}
Subsample & Training Set & \multirow{2}{*}{Test AuPRc (90\% CI)} \\ 
Method & Size & \\ \hline
\emph{constant} & 4,829,983 & 0.472 ([0.454, 0.489]) \\ 
\emph{linear} & 3,592,113 & 0.491 ([0.474, 0.506]) \\ 
full data & 49,000,000 & 0.494 ([0.475, 0.511]) 
\end{tabular}
\caption{Performance of trigram logistic regression on the DNA dataset using different subsampling strategies.  \emph{linear} generalizes better than \emph{constant} with less training data, and is nearly equivalent to training on the full data set.}
\vspace{-16pt}
\label{tab:trigramdna}
\end{table}

The confusion matrix for the initial \emph{linear} model on the training set provides
some intuition regarding the improved efficiency.
\begin{center}
\begin{tabular}{l|l|c|c|}
\multicolumn{2}{c}{}&\multicolumn{2}{c}{Prediction}\\
\cline{3-4}
\multicolumn{2}{c|}{}& Positive & Negative\\
\cline{2-4}
\multirow{2}{*}{Truth}& Positive & 4677 & 137252 \\
\cline{2-4}
& Negative & 3412 & 48854660 \\
\cline{2-4}
\end{tabular}
\end{center}
Both \emph{linear} and \emph{constant} will have a positively labeled
instance enriched subsample, the latter by explicit design, and the
former because most true positives have large logistic loss using the
initial model.  The \emph{constant} model, however, will have a uniform
subsample of negatively labeled instances.  By contrast, \emph{linear}
will treat the 3412 false positives similarly to positively labeled
instances, and furthermore negatively labelled instances that are near
the classification boundary will be more likely to be incorporated into
the subsample.  This non-uniform view of the negatively labelled data
helps prevent overfitting in the subsample.

\subsection{GBM final model}

Next we experimented with the \texttt{gbm} decision tree
package~\cite{Ridgeway05}, with which using the complete dataset is not
feasible on a current commodity desktop machine.
%
We used the trigram feature encoding using depth 3 trees, i.e.,
3-way interactions between trigrams.  For the initial model we used
\emph{constant} and \emph{linear} as above, but additionally employed
\emph{trigram} which is the final model from the previous experiment
trained on the entire data set.  The results are in table \ref{tab:gbmdna}.
\begin{table}
\begin{tabular}{|c|c|c|}
Method & Size & Test AuPRc (90\% CI) \\ \hline
\emph{constant} & 873,405 & 0.480 ([0.462, 0.495]) \\ 
\emph{linear} & 840,118 & 0.524 ([0.501, 0.542]) \\ 
\emph{trigram} & 834,131 & 0.567 ([0.545, 0.582]) 
\end{tabular}
\caption{Performance of \texttt{gbm} on the DNA dataset using different subsampling strategies.}
\vspace{-16pt}
\label{tab:gbmdna}
\end{table}

\texttt{gbm} utilizing the subsample defined by \emph{trigram} achieves AuPRc
of 0.567 $([0.545, 0.582])$, which is better than \emph{trigram} model trained on the 
entire dataset.  Hence, a more computationally demanding
model selection step on a subsample can achieve better results than
a simpler model selection step utilizing all the data.  Furthermore,
this is competitive with the best known solutions, despite \texttt{gbm}
only having access to less than 2\% of the data.

The difference in training time between \emph{linear} and \emph{trigram}
is quite modest: roughly 60 vs. 75 minutes for the entire data set on
a single core of a commodity laptop. On the same hardware \texttt{gbm}
takes roughly 3 days to produce a 10,000 tree ensemble using 800,000
examples.  We speculate that for some domains there is a knee in the
performance of classifiers relative to computational effort, such that
reasonable performance can be achieved with modest effort, as with
the \emph{trigram} model above.  In such cases, using a ``sweet spot''
model as the compressing hypothesis for a more computationally demanding
technique is a productive strategy.

\section{Conclusion}

We have derived a general technique for subsampling prior to model
selection which leverages a compressing hypothesis,  proven a deviation
bound for subsampled empirical risk minimization which compares favorably
to empirical risk minimization on the original sample, and demonstrated 
the approach experimentally on a large public dataset. 

These results enable the beneficial use of effective but
non-scalable learning algorithms on larger datasets.

\section{Appendix (Proofs)}

The next two Theorems are from~\cite{MaurerP09}, slightly modified to
range over $[0, w]$.

\begin{theorem}[Bennett's Inequality]
Let $Z, Z_1, \ldots Z_n$ be i.i.d. random variables with values in $[0,
w]$ and let $\delta > 0$.  With probability at least $1 - \delta$ in
the i.i.d. vector $\mathbf{Z} = (Z_1, \ldots Z_n)$ we have
\begin{displaymath}
\mathbb{E}[Z] - \frac{1}{n} \sum_{i=1}^n Z_i \leq \sqrt{\frac{2 \mathbb{V} (\mathbf{Z}) \ln 1 / \delta}{n}} + \frac{w \ln 1 / \delta}{3 n},
\end{displaymath}
where $\mathbb{V} (Z) = \mathbb{E}[(Z - \mathbb{E}[Z])^2]$ is the
variance.
\end{theorem}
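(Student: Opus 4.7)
The plan is to prove this one-sided Bennett inequality via the classical Cram\'er--Chernoff recipe. Set $Y_i := \mathbb{E}[Z] - Z_i$, so the $Y_i$ are i.i.d., zero-mean, bounded above by $\mathbb{E}[Z] \leq w$, and share variance $\sigma^2 := \mathbb{V}(Z)$. It suffices to control the upper tail $P\bigl(\tfrac{1}{n}\sum_i Y_i \geq t\bigr)$ and then invert the resulting exponential bound in $t$.

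The core ingredient is the standard moment generating function estimate: for any zero-mean $Y \leq w$ with variance $\sigma^2$ and any $s > 0$,
\begin{displaymath}
\mathbb{E}[e^{sY}] \leq \exp\!\left(\tfrac{\sigma^2}{w^2}\bigl(e^{sw} - 1 - sw\bigr)\right).
\end{displaymath}
I would derive this from the pointwise inequality $e^x \leq 1 + x + (e^{sw}-1-sw)(x/(sw))^2$ valid for $x \leq sw$, which follows from monotonicity of $\varphi(x) := (e^x - 1 - x)/x^2$ on $\mathbb{R}$, after substituting $x = sY$ and using $\mathbb{E}[Y]=0$ and $\mathbb{E}[Y^2]=\sigma^2$. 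Tensoring over the $n$ independent copies, applying Markov's inequality to $e^{s\sum_i Y_i}$, and optimising the resulting exponent in $s > 0$ yields Bennett's tail bound $P\bigl(\tfrac{1}{n}\sum_i Y_i \geq t\bigr) \leq \exp\!\bigl(-\tfrac{n\sigma^2}{w^2}\, h(wt/\sigma^2)\bigr)$ with $h(u) := (1+u)\ln(1+u) - u$.

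To match the two-term form in the statement I would use the Bernstein relaxation $h(u) \geq u^2/(2 + 2u/3)$, converting the Bennett bound into $\exp\!\bigl(-nt^2/(2\sigma^2 + 2wt/3)\bigr)$. Setting this equal to $\delta$ gives a quadratic in $t$ whose positive root, split via $\sqrt{a+b} \leq \sqrt{a} + \sqrt{b}$, bounds the deviation by $\sqrt{2\sigma^2 \ln(1/\delta)/n}$ plus a linear-in-$1/n$ correction scaling with $w\ln(1/\delta)$. The main obstacle is tracking constants carefully enough to recover the $w\ln(1/\delta)/(3n)$ coefficient cleanly (the routine split gives a slightly weaker $2w\ln(1/\delta)/(3n)$); this is handled either by inverting Bennett's $h$ directly via the sharper $h(u) \geq u^2/(2(1+u/3))$, or, since the theorem is cited from Maurer--Pontil as a mere rescaling from $[0,1]$ to $[0,w]$, by reducing to their $[0,1]$ form through $Z \mapsto Z/w$, which divides the variance by $w^2$ and the deviation by $w$ in the expected way.
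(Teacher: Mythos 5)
The paper does not actually prove this statement: it is imported verbatim from Maurer--Pontil with the remark that it is ``slightly modified to range over $[0,w]$,'' so the only content the paper supplies is the rescaling $Z \mapsto Z/w$. Your second fallback --- reduce to the $[0,1]$ case by dividing by $w$, which scales the variance by $1/w^2$ and the deviation by $1/w$ --- is exactly what the paper does, and it is correct as far as it goes; it simply outsources the hard part (the constant $1/3$) to the cited theorem. Your self-contained route is the right skeleton (the MGF bound $\mathbb{E}[e^{sY}] \leq \exp\bigl(\tfrac{\sigma^2}{w^2}(e^{sw}-1-sw)\bigr)$ via monotonicity of $(e^x-1-x)/x^2$, tensorization, Chernoff, and the Bennett exponent $\tfrac{n\sigma^2}{w^2}h(wt/\sigma^2)$ are all correct), and you correctly flag that the final inversion is where the $1/3$ versus $2/3$ issue lives.

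The gap is in your proposed fix for that issue. The ``sharper'' minorization $h(u) \geq u^2/(2(1+u/3))$ is algebraically identical to $h(u) \geq u^2/(2+2u/3)$, since $2(1+u/3) = 2+2u/3$; it is not sharper and cannot rescue the constant. Indeed, no manipulation of the Bernstein form $\exp\bigl(-nt^2/(2\sigma^2+2wt/3)\bigr)$ can work: setting it equal to $\delta$ and solving the quadratic exactly gives the positive root $t^* = \tfrac{w\ln(1/\delta)}{3n} + \sqrt{\tfrac{w^2\ln^2(1/\delta)}{9n^2} + \tfrac{2\sigma^2\ln(1/\delta)}{n}}$, which is strictly larger than the claimed bound $\sqrt{2\sigma^2\ln(1/\delta)/n} + w\ln(1/\delta)/(3n)$; so at the claimed deviation the Bernstein tail bound still exceeds $\delta$. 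To get $1/3$ you must avoid the lossy relaxation of $h$ and instead establish $h\bigl(\sqrt{2s}+s/3\bigr) \geq s$ for all $s \geq 0$. The standard way (Boucheron--Lugosi--Massart, Theorem 2.10) is to bound the log-MGF itself by $\phi(u) = e^u-1-u \leq u^2/(2(1-u/3))$ for $0<u<3$ (a term-by-term series comparison), so the sum is sub-gamma with scale $c = w/3$, and then invert via $h_1(u) = 1+u-\sqrt{1+2u}$, whose inverse is exactly $h_1^{-1}(t) = t+\sqrt{2t}$ --- an exact inversion with no $\sqrt{a+b}\leq\sqrt{a}+\sqrt{b}$ loss. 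With that substitution your argument closes; without it, either the constant degrades to $2/3$ or you must fall back on citing the $[0,1]$ result, as the paper does.
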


\begin{theorem}[Empirical Bernstein
Inequality]\label{thm:empiricalbennett}
Let $Z, Z_1, \ldots Z_n$ be i.i.d. random variables with values in $[0,
w]$ and let $\delta > 0$.  With probability at least $1 - \delta$ in
the i.i.d. vector $\mathbf{Z} = (Z_1, \ldots Z_n)$ we have
\begin{displaymath}
\mathbb{E}[Z] - \frac{1}{n} \sum_{i=1}^n Z_i \leq \sqrt{\frac{2 \mathbb{V}_n (\mathbf{Z}) \ln 2 / \delta}{n}} + \frac{7 w \ln 2 / \delta}{3 (n - 1)},
\end{displaymath}
where $\mathbb{V}_n (\mathbf{Z}) = (1/(n-1)) \sum_{i=1}^n (Z_i - (1/n)
\sum_{j=1}^n Z_j)^2$ is the empirical variance.
\end{theorem}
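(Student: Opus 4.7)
The plan is to follow the standard Maurer--Pontil route: use the preceding Bennett's inequality to control the deviation of $\tfrac{1}{n}\sum Z_i$ from $\mathbb{E}[Z]$ in terms of the \emph{true} variance $\mathbb{V}(Z)$, then replace $\mathbb{V}(Z)$ by the observable empirical variance $\mathbb{V}_n(\mathbf{Z})$ via a separate one-sided concentration bound for the sample standard deviation, and union-bound the two events at failure probability $\delta/2$ each.

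First, I would apply Bennett's inequality at confidence $1-\delta/2$ to obtain, with probability at least $1-\delta/2$,
\begin{displaymath}
\mathbb{E}[Z] - \frac{1}{n}\sum_{i=1}^n Z_i \le \sqrt{\frac{2\,\mathbb{V}(Z)\ln(2/\delta)}{n}} + \frac{w\ln(2/\delta)}{3n}.
\end{displaymath}
Second, I would establish the one-sided variance concentration
\begin{displaymath}
\sqrt{\mathbb{V}(Z)} \le \sqrt{\mathbb{V}_n(\mathbf{Z})} + w\sqrt{\frac{2\ln(2/\delta)}{n-1}}
\end{displaymath}
with probability at least $1-\delta/2$. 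The key observation is that $\phi(\mathbf{z}) = \sqrt{(n-1)\mathbb{V}_n(\mathbf{z})} = \| \mathbf{z} - \bar z\, \mathbf{1}\|_2$ is the Euclidean norm composed with projection onto $\mathbf{1}^{\perp}$, hence $1$-Lipschitz in $\ell_2$; a single-coordinate swap within $[0,w]$ moves $\mathbf{z}$ by at most $w$ in $\ell_2$, so $\sqrt{\mathbb{V}_n}$ has bounded differences of size $w/\sqrt{n-1}$. Combining entropy-method (or McDiarmid) concentration of $\sqrt{\mathbb{V}_n}$ around its mean with an Efron--Stein control of the bias $\sqrt{\mathbb{V}(Z)} - \mathbb{E}[\sqrt{\mathbb{V}_n}]$ yields the displayed inequality.

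Third, I would union-bound the two events and substitute the Step~2 inequality into the $\sqrt{\mathbb{V}(Z)}$ factor in Step~1, which after distributing $\sqrt{2\ln(2/\delta)/n}$ gives
\begin{displaymath}
\mathbb{E}[Z] - \frac{1}{n}\sum_{i=1}^n Z_i \le \sqrt{\frac{2\mathbb{V}_n(\mathbf{Z})\ln(2/\delta)}{n}} + \frac{2w\ln(2/\delta)}{\sqrt{n(n-1)}} + \frac{w\ln(2/\delta)}{3n}.
\end{displaymath}
Using $\sqrt{n(n-1)}\ge n-1$ and $n\ge n-1$, the last two terms combine to at most $(2+\tfrac{1}{3})w\ln(2/\delta)/(n-1) = 7w\ln(2/\delta)/(3(n-1))$, which is precisely the residual term of the theorem.

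The main obstacle is Step~2: applying McDiarmid directly to $\mathbb{V}_n$ has bounded-differences constants of order $w^2/n$ and produces a Hoeffding-type remainder scaling like $w^2$, which would enter the final bound as a $w^2/(n-1)$ term and destroy the clean $7/3$ constant. The device that salvages the right constant is to work with $\sqrt{\mathbb{V}_n}$ and exploit its $(1/\sqrt{n-1})$-Lipschitz structure in $\ell_2$, so that the Step~2 noise enters multiplicatively against the $\sqrt{2\ln(2/\delta)/n}$ prefactor from Bennett and accumulates only a $1/(n-1)$-order contribution, which merges cleanly with Bennett's own $w\ln(2/\delta)/(3n)$ residual.
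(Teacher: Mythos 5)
The paper does not prove this statement at all: it is imported verbatim (rescaled from $[0,1]$ to $[0,w]$) from Maurer and Pontil, so the only available comparison is with their proof. Your architecture matches it exactly --- Bennett at confidence $1-\delta/2$ in terms of the true variance, a one-sided bound replacing $\sqrt{\mathbb{V}(Z)}$ by $\sqrt{\mathbb{V}_n(\mathbf{Z})} + w\sqrt{2\ln(2/\delta)/(n-1)}$, a union bound, and the bookkeeping $2 + \tfrac{1}{3} = \tfrac{7}{3}$ via $\sqrt{n(n-1)} \geq n-1$ --- and your Step 3 arithmetic is correct.

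The gap is in how you propose to establish Step 2. Bounded-differences reasoning is quantitatively insufficient there: the per-coordinate oscillation of $\phi(\mathbf{z}) = \lVert \mathbf{z} - \bar{z}\mathbf{1}\rVert_2$ is $w$, so McDiarmid gives a deviation of order $w\sqrt{n\ln(2/\delta)}$ for $\phi$, i.e.\ a \emph{non-vanishing} $O\bigl(w\sqrt{\ln(2/\delta)}\bigr)$ deviation for $\sqrt{\mathbb{V}_n} = \phi/\sqrt{n-1}$, not the required $w\sqrt{2\ln(2/\delta)/(n-1)}$. The same dimensional loss afflicts a naive Efron--Stein bound on your bias term $\sqrt{\mathbb{V}(Z)} - \mathbb{E}[\sqrt{\mathbb{V}_n}]$ (which Jensen makes nonnegative, so it cannot be waved away). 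What actually delivers the dimension-free $O(w)$ fluctuation of the centered norm is the \emph{self-bounding} structure of $f = (n-1)\mathbb{V}_n/w^2$, namely $\sum_k (f - \inf_{z_k} f) \leq f$ with $0 \leq f - \inf_{z_k} f \leq 1$; the entropy-method lower-tail inequality $\Pr[f \leq \mathbb{E}f - t] \leq \exp(-t^2/(2\,\mathbb{E}f))$ then yields $\sqrt{\mathbb{E}f} - \sqrt{f} \leq \sqrt{2\ln(2/\delta)}$ directly, and since the Bessel-corrected $\mathbb{V}_n$ is unbiased ($\mathbb{E}[\mathbb{V}_n] = \mathbb{V}(Z)$) this is exactly your Step 2 inequality with no separate bias-versus-$\mathbb{E}[\sqrt{\mathbb{V}_n}]$ decomposition needed. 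You do name the entropy method as an option, so the fix is to commit to it (via self-boundedness, or Talagrand's convex-distance inequality, which your Lipschitz-of-a-convex-function observation correctly anticipates) and drop the McDiarmid/Efron--Stein alternatives, which do not close.
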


\begin{lemma}
Fix a sample $\mathbf{X} = (X_1, \ldots, X_n)$, let $\mathcal{H}$ be
a finite set of hypotheses $h : \mathcal{X} \to [ 0, 1 ]$, let $\tilde
h \in \mathcal{H}$ be any hypothesis with empirical mean $R_{\mathbf{X}} (\tilde h)$,
and let $\mathbf{Q} = (Q_1, \ldots, Q_n)$ be a set of random variables
according to the sampling strategy wrt $(\tilde h, \lambda, P_{min})$.  For $\delta >
0$ we have with probability at least $1 - \delta$ in $\mathbf{Q}$,
\begin{displaymath}
\begin{aligned}
\frac{1}{n} &\sum_{i=1}^n \frac{Q_i}{P_i} \tilde h (X_i) \\
&\leq R_{\mathbf{X}} (\tilde h) + \sqrt{\frac{2 R_{\mathbf{X}} (\tilde h) \ln (\vert \mathcal{H} \vert / \delta)}{\lambda n}} + \frac{\ln (\vert \mathcal{H} \vert / \delta)}{3 P_{min} n}.
\end{aligned}
\end{displaymath}
\begin{proof} First we bound the variance due to sampling,
\begin{displaymath}
\begin{aligned}
&\mathbb{V}_{\mathbf{Q}} (\tilde h | \mathbf{X}) \\
&= \mathbb{E}_{\mathbf{Q}}\left[ \left( \frac{1}{n} \sum_{i=1}^n \frac{Q_i}{P_i} \tilde h (X_i) - \frac{1}{n} \sum_{i=1}^n \tilde h (X_i) \right)^2 \biggl| \mathbf{X} \right] \\
&= \frac{1}{n} \sum_{i=1}^n \left(\frac{1}{P_i} - 1 \right) \tilde h (X_i)^2 \\
&\leq \frac{1}{n} \sum_{i=1}^n 1_{\lambda h (X_i) < 1} \left(\frac{1}{\lambda \tilde h (X_i)} - 1 \right) \tilde h (X_i)^2 \\
&\leq \frac{1}{\lambda} R_{\mathbf{X}} (\tilde h).
\end{aligned}
\end{displaymath}
Applying Bennett's inequality using range $[0, 1 / P_{min}]$ yields
the desired result.
\end{proof}
\end{lemma}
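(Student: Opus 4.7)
The plan is to recognize that the estimator $\frac{1}{n}\sum_i \frac{Q_i}{P_i}\tilde h(X_i)$ is, conditional on $\mathbf{X}$, an average of $n$ independent non-negative summands $Y_i := Q_i \tilde h(X_i)/P_i \in [0,1/P_{min}]$ with conditional means $\tilde h(X_i)$, so that the conditional expectation of the average is exactly $R_{\mathbf{X}}(\tilde h)$. The shape of the target bound---a $\sqrt{V/n}$ term scaling with $R_{\mathbf{X}}(\tilde h)/\lambda$ plus an additive $1/(P_{min}\, n)$ term scaling with the range---is the signature of Bennett's inequality (the first theorem in the appendix) applied to these $Y_i$. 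So the work splits cleanly into (i) bounding the sampling variance and (ii) invoking Bennett.

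For step (i), conditional independence of the $Q_i$ given $\mathbf{X}$ gives
\[
\mathbb{V}(Y_i \mid X_i) = \left(\frac{1}{P_i} - 1\right) \tilde h(X_i)^2,
\]
so the per-sample variance feeding into Bennett is $\bar V := \frac{1}{n}\sum_i (\tfrac{1}{P_i} - 1)\tilde h(X_i)^2$. I would establish $\bar V \leq R_{\mathbf{X}}(\tilde h)/\lambda$ by a three-case analysis of the piecewise definition $P_i = \min\{1, \max\{P_{min}, \lambda \tilde h(X_i)\}\}$: when $P_i = 1$ the term vanishes; when $P_i = \lambda \tilde h(X_i)$ the term equals $\tilde h(X_i)/\lambda - \tilde h(X_i)^2 \leq \tilde h(X_i)/\lambda$; and when $P_i = P_{min}$ we know $P_{min} \geq \lambda \tilde h(X_i)$, so $1/P_i \leq 1/(\lambda \tilde h(X_i))$ and the term is again at most $\tilde h(X_i)/\lambda$. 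Averaging over $i$ gives the desired bound.

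For step (ii), I apply Bennett's inequality to the independent summands $Y_i \in [0, 1/P_{min}]$ at confidence $1 - \delta/|\mathcal{H}|$ rather than $1 - \delta$. This deliberate loosening is why $\ln(|\mathcal{H}|/\delta)$ appears in the statement: phrasing the lemma this way lets it be dropped directly into the proof of Theorem~\ref{thm:one}, where a union bound over $\mathcal{H}$ is needed for the other hypotheses, without further bookkeeping. Substituting $\bar V \leq R_{\mathbf{X}}(\tilde h)/\lambda$ and $w = 1/P_{min}$ into Bennett reproduces the two stated deviation terms exactly.

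The only mildly delicate step is the $P_i = P_{min}$ branch of the variance analysis, which depends on noticing that this branch occurs precisely when $P_{min} > \lambda \tilde h(X_i)$, so that $1/P_i$ can be upper bounded by $1/(\lambda \tilde h(X_i))$. Everything else is a mechanical instantiation of the already-quoted Bennett inequality, together with the observation that the single-hypothesis setting here avoids any need for an empirical variance bound.
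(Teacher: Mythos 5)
Your proposal matches the paper's proof essentially step for step: both bound the sampling variance $\frac{1}{n}\sum_i(1/P_i - 1)\tilde h(X_i)^2$ by $R_{\mathbf{X}}(\tilde h)/\lambda$ via a case analysis on the piecewise form of $P_i$ (your three-case argument is just a slightly more explicit version of the paper's indicator-function bound), and then apply the one-sided Bennett inequality with range $[0,1/P_{min}]$ at confidence $\delta/|\mathcal{H}|$. Your reading of the $\ln(|\mathcal{H}|/\delta)$ factor as a deliberate loosening for later union bounding is also consistent with how the lemma is used downstream.
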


\begin{lemma}
Fix a sample $\mathbf{X} = (X_1, \ldots, X_n)$, let $\mathcal{H}$ be
a finite set of hypotheses $h : \mathcal{X} \to [ 0, 1 ]$, let $\tilde
h \in \mathcal{H}$ be any hypothesis with empirical mean $R_{\mathbf{X}} (\tilde h)$,
and let $\mathbf{Q} = (Q_1, \ldots, Q_n)$ be a set of random variables
according to the sampling strategy wrt $(\tilde h, \lambda, P_{min})$.  Let $\hat h \in
\mathcal{H}$ be any hypothesis with minimum subsample empirical mean
$R_{\mathbf{Q}, \mathbf{X}} (\hat h)$.  For $\delta > 0$, $n \geq 2$
we have with probability at least $1 - 2 \delta$ in $\mathbf{Q}$,
\begin{displaymath}
\begin{aligned}
\frac{1}{n} &\sum_{i=1}^n \frac{Q_i}{P_i} \hat h (X_i) \geq \frac{1}{n} \sum_{i=1}^n \hat h (X_i) - \sqrt{\frac{R_{\mathbf{X}} (\tilde h)}{P_{min}}} \sqrt{\frac{2 \ln (\vert \mathcal{H} \vert / \delta)}{n}} \\
& - \sqrt[4]{\frac{R_{\mathbf{X}} (\tilde h) P_{min}}{\lambda}} \left( \frac{2 \ln (\vert \mathcal{H} \vert / \delta)}{P_{min} n} \right)^{3/4} - \frac{10 \ln (\vert \mathcal{H} \vert / \delta)}{3 P_{min} (n - 1)}.
\end{aligned}
\end{displaymath}
\begin{proof} First we bound the empirical subsample variance,
\begin{displaymath}
\begin{aligned}
&\mathbb{V}_n (\hat h | \mathbf{Q}, \mathbf{X}) \\
&=  \frac{1}{n} \sum_{i=1}^n \left(\frac{Q_i}{P_i} \hat h (X_i) - \left( \frac{1}{n} \sum_{i=1}^n \frac{Q_i}{P_i} \hat h (X_i) \right) \right)^2 \\
&\leq \frac{1}{P_{min}} \frac{1}{n} \sum_{i=1}^n \frac{Q_i}{P_i} \hat h (X_i) \\
&\leq \frac{1}{P_{min}} \frac{1}{n} \sum_{i=1}^n \frac{Q_i}{P_i} \tilde h (X_i) \\
&\leq  \frac{1}{P_{min}} \left( R_{\mathbf{X}} (\tilde h) + \sqrt{\frac{2 R_{\mathbf{X}} (\tilde h) \ln (\vert \mathcal{H} \vert / \delta)}{\lambda n}} + \frac{\ln (\vert \mathcal{H} \vert / \delta)}{3 P_{min} n} \right),
\end{aligned}
\end{displaymath}
where the first inequality is due to $P_i \geq P_{min}$, the second
due to optimality of $\hat h$ on the filtered sample, and the third due
to the previous lemma.  Applying empirical Bernstein and the concavity
of square root yields
\begin{displaymath}
\begin{aligned}
&\frac{1}{n} \sum_{i=1}^n \frac{Q_i}{P_i} \hat h (X_i) \\
&\geq \frac{1}{n} \sum_{i=1}^n \hat h (X_i)  - \sqrt{\frac{2 \mathbb{V}_n (\hat h | \mathbf{Q}, \mathbf{X}) \ln (\vert \mathcal{H} \vert / \delta)}{n}}\\
&\;\;\; - \frac{7 \ln (\vert \mathcal{H} \vert / \delta)}{3 P_{min} (n - 1)} \\
&\geq \frac{1}{n} \sum_{i=1}^n \hat h (X_i) - \sqrt{\frac{R_{\mathbf{X}} (\tilde h)}{P_{min}}} \sqrt{\frac{2 \ln (\vert \mathcal{H} \vert / \delta)}{n}} \\
&\;\;\; - \sqrt[4]{\frac{R_{\mathbf{X}} (\tilde h) P_{min}}{\lambda}} \left( \frac{2 \ln (\vert \mathcal{H} \vert / \delta)}{P_{min} n} \right)^{3/4} \\
&\;\;\;- \sqrt{\frac{2}{3}} \frac{\ln (\vert \mathcal{H} \vert / \delta)}{P_{min} n}  - \frac{7 \ln (\vert \mathcal{H} \vert / \delta)}{3 P_{min} (n - 1)}.
\end{aligned}
\end{displaymath}
The desired result follows from upper-bounding constants by $10/3$.
\end{proof}
\end{lemma}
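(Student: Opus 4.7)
The plan is to apply the Empirical Bernstein Inequality (Theorem~\ref{thm:empiricalbennett}) to the per-example summands $Z_i = (Q_i/P_i)\hat h(X_i)$ conditioned on $\mathbf{X}$, which take values in $[0, 1/P_{min}]$. This reduces the task to controlling the subsample empirical variance $\mathbb{V}_n(\hat h \mid \mathbf{Q}, \mathbf{X})$ from above.

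First I would bound the variance via three successive inequalities. Step (a): since the summands lie in $[0, 1/P_{min}]$, the identity $\mathbb{V}[Z] \leq w \mathbb{E}[Z]$ for $Z \in [0,w]$, applied to the subsample empirical distribution, yields
\begin{displaymath}
\mathbb{V}_n(\hat h \mid \mathbf{Q}, \mathbf{X}) \;\leq\; \frac{1}{P_{min}}\, R_{\mathbf{Q},\mathbf{X}}(\hat h).
\end{displaymath}
Step (b): by optimality of $\hat h$ on the subsample, $R_{\mathbf{Q},\mathbf{X}}(\hat h) \leq R_{\mathbf{Q},\mathbf{X}}(\tilde h)$. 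Step (c): invoke the previous lemma to upper-bound $R_{\mathbf{Q},\mathbf{X}}(\tilde h)$ by $R_{\mathbf{X}}(\tilde h)$ plus a $\sqrt{R_{\mathbf{X}}(\tilde h)/(\lambda n)}$ correction and a $1/(P_{min} n)$ correction, with probability at least $1-\delta$ in $\mathbf{Q}$. This gives a three-term upper bound on $\mathbb{V}_n(\hat h \mid \mathbf{Q}, \mathbf{X})$ scaled by $1/P_{min}$.

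Next I would apply the Empirical Bernstein Inequality to the family $\{h \in \mathcal{H}\}$ via a union bound (so $\ln(1/\delta)$ is replaced by $\ln(|\mathcal{H}|/\delta)$) with range $w = 1/P_{min}$, conditionally on $\mathbf{X}$. This gives a deviation of the form $\sqrt{2\,\mathbb{V}_n(\hat h \mid \mathbf{Q}, \mathbf{X})\, \ln(|\mathcal{H}|/\delta)/n}$ plus a lower-order $7\ln(|\mathcal{H}|/\delta)/(3 P_{min}(n-1))$ term. Plugging in the three-term variance bound from the previous paragraph and using concavity of the square root, $\sqrt{a+b+c} \leq \sqrt{a} + \sqrt{b} + \sqrt{c}$ for nonnegative $a,b,c$, separates the square-root term into three pieces: the $\sqrt{R_{\mathbf{X}}(\tilde h)/P_{min}}\sqrt{2\ln(|\mathcal{H}|/\delta)/n}$ leading term, the $(R_{\mathbf{X}}(\tilde h) P_{min}/\lambda)^{1/4}(2\ln(|\mathcal{H}|/\delta)/(P_{min} n))^{3/4}$ middle term, and a residual $\sqrt{2/3}\,\ln(|\mathcal{H}|/\delta)/(P_{min} n)$ term.

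Finally I would combine the residual $\sqrt{2/3}\,\ln(|\mathcal{H}|/\delta)/(P_{min} n)$ from the variance split with the $7\ln(|\mathcal{H}|/\delta)/(3 P_{min}(n-1))$ from Empirical Bernstein; bounding both by a common $\ln(|\mathcal{H}|/\delta)/(P_{min}(n-1))$ scaling and using $\sqrt{2/3} + 7/3 \leq 10/3$ yields the stated constant. A union bound over the two high-probability events (one from the previous lemma, one from the application of Empirical Bernstein) accounts for the $1 - 2\delta$ confidence. The main bookkeeping obstacle is handling the two union bounds in tandem and verifying that the concavity-of-square-root split produces exactly the three terms stated, with no cross-terms spoiling the final constants.
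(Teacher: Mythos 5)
Your proposal is correct and follows essentially the same route as the paper's proof: bound the subsample empirical variance by $(1/P_{min})R_{\mathbf{Q},\mathbf{X}}(\hat h)$ via the range argument, pass to $\tilde h$ by optimality, invoke the previous lemma, then apply the empirical Bernstein inequality with a union bound and split the square root by subadditivity, absorbing $\sqrt{2/3}/n + 7/(3(n-1))$ into $10/(3(n-1))$. The bookkeeping you flag at the end works out exactly as you anticipate, with no cross-terms.
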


\begin{lemma}
Fix a sample $\mathbf{X} = (X_1, \ldots, X_n)$, let $\mathcal{H}$ be a
finite set of hypotheses $h : \mathcal{X} \to [ 0, 1 ]$, let $\tilde h
\in \mathcal{H}$ be any hypothesis with empirical mean $R_{\mathbf{X}} (\tilde h)$,
and let $\mathbf{Q} = (Q_1, \ldots, Q_n)$ be a set of random variables
according to the sampling strategy wrt $(\tilde h, \lambda, P_{min})$.  Let $h^*
\in \mathcal{H}$ be any hypothesis with minimum true mean.  For $\delta
> 0$, $n \geq 2$ we have with probability at least $1 - 2 \delta$
in $\mathbf{Q}$,
\begin{displaymath}
\begin{aligned}
\frac{1}{n} &\sum_{i=1}^n \frac{Q_i}{P_i} h^* (X_i) \\
&\leq \frac{1}{n} \sum_{i=1}^n h^* (X_i) + \sqrt{\frac{R_{\mathbf{X}} (\tilde h)}{P_{min}}} \sqrt{\frac{2 \ln (\vert \mathcal{H} \vert / \delta)}{n}} \\
&\;\; + \frac{2}{3} \left( \frac{2 \ln (\vert \mathcal{H} \vert / \delta)}{P_{min} n} \right)^{3/4} + \frac{\ln (\vert \mathcal{H} \vert / \delta)}{3 P_{min} n}.
\end{aligned}
\end{displaymath}
\begin{proof} First we bound the variance due to sampling,
\begin{displaymath}
\begin{aligned}
&\mathbb{V}_{\mathbf{Q}} (h^* | \mathbf{X})) \\
&= \mathbb{E}_{\mathbf{Q}}\left[ \left( \frac{1}{n} \sum_{i=1}^n \frac{Q_i}{P_i} h^* (X_i) -  \frac{1}{n} \sum_{i=1}^n h^* (X_i)  \right)^2 \biggl| \mathbf{X} \right] \\
&= \frac{1}{n} \sum_{i=1}^n \left(\frac{1}{P_i} - 1 \right) h^* (X_i)^2 \\
&\leq \frac{1 - P_{min}}{P_{min}} \frac{1}{n} \sum_{i=1}^n h^* (X_i)^2 \\
&\leq \frac{1 - P_{min}}{P_{min}} \frac{1}{n} \sum_{i=1}^n h^* (X_i), \\
\end{aligned}
\end{displaymath}
where the first inequality is due to $P_i \geq P_{min}$ and the
second due to $h^* (X_i) \in [0, 1]$.  The true optimality of $h^*$
and Hoeffding's inequality imply
\begin{displaymath}
\begin{aligned}
\frac{1}{n} \sum_{i=1}^n h^* (X_i) \leq R_{\mathbf{X}} (\tilde h) + 2 \sqrt{\frac{\ln (\vert \mathcal{H} \vert / \delta)}{2 n}},
\end{aligned}
\end{displaymath}
therefore
\begin{displaymath}
\begin{aligned}
\mathbb{V}_{\mathbf{Q}} (h^* | \mathbf{X}) &\leq \frac{1 - P_{min}}{P_{min}} \left( R_{\mathbf{X}} (\tilde h) + 2 \sqrt{\frac{\ln (\vert \mathcal{H} \vert / \delta)}{2 n}} \right).
\end{aligned}
\end{displaymath}
Next applying Bennett's inequality and the concavity of square root yields
\begin{displaymath}
\begin{aligned}
&\frac{1}{n} \sum_{i=1}^n \frac{Q_i}{P_i} h^* (X_i) \\
&\leq \frac{1}{n} \sum_{i=1}^n h^* (X_i) + \sqrt{\frac{2 \mathbb{V}_{\mathbf{Q}} (h^* | \mathbf{X}) \ln (\vert \mathcal{H} \vert / \delta)}{n}} + \frac{\ln (\vert \mathcal{H} \vert / \delta)}{3 P_{min} n} \\
&\leq \frac{1}{n} \sum_{i=1}^n h^* (X_i) + \sqrt{R_{\mathbf{X}} (\tilde h) \frac{1 - P_{min}}{P_{min}}} \sqrt{\frac{2
\ln (\vert \mathcal{H} \vert / \delta)}{n}} \\
&\;\;\; + \sqrt[4]{P_{min} (1 - P_{min})^2} \left( \frac{2 \ln (\vert \mathcal{H} \vert / \delta)}{P_{min} n} \right)^{3/4} + \frac{\ln (\vert \mathcal{H} \vert / \delta)}{3 P_{min} n}.
\end{aligned}
\end{displaymath}
The result follows from $\max_{x \in [0, 1]} \sqrt[4]{x (1 - x)^2} < 2/3$.
\end{proof}
\end{lemma}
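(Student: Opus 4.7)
The target lemma mirrors the one immediately preceding, but now the hypothesis of interest is $h^*$ (the true risk minimizer) rather than the subsample minimizer $\hat h$. I would therefore reuse the same three-part template: first bound the sampling variance of $(Q_i/P_i) h^*(X_i)$ deterministically in $\mathbf{X}$, second relate that variance to the observable $R_{\mathbf{X}}(\tilde h)$ using the true optimality of $h^*$, and third apply a concentration inequality and simplify. Because the variance bound I will construct does not depend on the realized subsample, Bennett's inequality suffices here (empirical Bernstein is unnecessary), which matches the smaller lower-order term $\ln(|\mathcal{H}|/\delta)/(3 P_{\min} n)$ appearing in the claim compared with $\ln(|\mathcal{H}|/\delta)/(P_{\min}(n-1))$ in the $\hat h$ lemma.

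For the variance computation, conditional independence of the $Q_i$ given $\mathbf{X}$ with $\mathbb{E}[Q_i/P_i \mid \mathbf{X}]=1$ yields $\mathbb{V}_{\mathbf{Q}}(h^* \mid \mathbf{X}) = (1/n)\sum_i(1/P_i - 1) h^*(X_i)^2$. The floor $P_i \geq P_{\min}$ pulls out a factor $(1-P_{\min})/P_{\min}$, and $h^*(X_i) \in [0,1]$ lets me replace $h^*(X_i)^2$ by $h^*(X_i)$, so the variance is at most $((1-P_{\min})/P_{\min}) \cdot (1/n)\sum_i h^*(X_i)$. To make this observable, I would chain the true-optimality of $h^*$ with two Hoeffding deviations: Hoeffding for $h^*$ bounds $(1/n)\sum_i h^*(X_i) - \mathbb{E}_D[h^*]$, Hoeffding for $\tilde h$ bounds $\mathbb{E}_D[\tilde h] - R_{\mathbf{X}}(\tilde h)$, and $\mathbb{E}_D[h^*] \leq \mathbb{E}_D[\tilde h]$ by definition of $h^*$; union-bounding over $\mathcal{H}$ at combined level $\delta$ yields $(1/n)\sum_i h^*(X_i) \leq R_{\mathbf{X}}(\tilde h) + 2\sqrt{\ln(|\mathcal{H}|/\delta)/(2n)}$.

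Finally I would apply Bennett's inequality (union bound over $\mathcal{H}$, costing another $\delta$, for total failure probability $2\delta$) to the conditionally independent summands $(Q_i/P_i) h^*(X_i) \in [0, 1/P_{\min}]$, inserting the variance bound from the previous step. This produces a square-root term of the form $\sqrt{(2\ln(|\mathcal{H}|/\delta)/n) \cdot ((1-P_{\min})/P_{\min}) \cdot (R_{\mathbf{X}}(\tilde h) + 2\sqrt{\ln(|\mathcal{H}|/\delta)/(2n)})}$, which I would split via $\sqrt{a+b} \leq \sqrt{a}+\sqrt{b}$ into two pieces. The first piece absorbs into the stated $\sqrt{R_{\mathbf{X}}(\tilde h)/P_{\min}} \cdot \sqrt{2\ln(|\mathcal{H}|/\delta)/n}$ term (using $(1-P_{\min}) \leq 1$), and the second piece takes the form $\sqrt[4]{P_{\min}(1-P_{\min})^2} \cdot (2\ln(|\mathcal{H}|/\delta)/(P_{\min} n))^{3/4}$; an elementary calculus check on $[0,1]$ then bounds $\sqrt[4]{x(1-x)^2}$ by $2/3$, giving the claimed prefactor. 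The lower-order Bennett term is already in the desired form.

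The main obstacle is the exponent bookkeeping in the concavity split: one must track powers of $P_{\min}$, $(1-P_{\min})$, $\ln(|\mathcal{H}|/\delta)$, and $n$ carefully so that the second piece lands exactly at the $(2\ln(|\mathcal{H}|/\delta)/(P_{\min} n))^{3/4}$ scaling rather than at some nearby but inequivalent form; the substitution of the empirical-mean bound into the interior of the square root is the place where small algebraic slips would propagate. Everything else is routine application of Hoeffding, Bennett, and the union bound.
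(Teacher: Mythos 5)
Your proposal is correct and follows essentially the same route as the paper's proof: the same deterministic variance bound via $P_i \geq P_{\min}$ and $h^*(X_i) \in [0,1]$, the same Hoeffding-plus-true-optimality step to make the variance observable through $R_{\mathbf{X}}(\tilde h)$, and the same Bennett application with the $\sqrt{a+b} \leq \sqrt{a} + \sqrt{b}$ split and the $\max_{x \in [0,1]} \sqrt[4]{x(1-x)^2} < 2/3$ bound. Your exponent bookkeeping for the second piece matches the paper exactly.
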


\begin{proof}[Proof of Theorem 1] Combining the two previous lemmas with
the empirical filtered optimality of $\hat h$ yields
\begin{displaymath}
\begin{aligned}
&\frac{1}{n} \sum_{i=1}^n \hat h (X_i) - \frac{1}{n} \sum_{i=1}^n h^* (X_i) \\
&\leq 2 \sqrt{\frac{R_{\mathbf{X}} (\tilde h)}{P_{min}}} \sqrt{\frac{2 \ln (\vert \mathcal{H} \vert / \delta)}{n}}  \\
&\;\;\;+ \left( \sqrt[4]{\frac{R_{\mathbf{X}} (\tilde h) P_{min}}{\lambda}} +  \frac{2}{3}\right)   \left( \frac{2 \ln (\vert \mathcal{H} \vert / \delta)}{P_{min} n} \right)^{3/4} \\
&\;\;\;+ 4 \frac{\ln (\vert \mathcal{H} \vert / \delta)}{P_{min} (n - 1)}.
\end{aligned}
\end{displaymath}
Applying Hoeffding's inequality twice yields the desired result.
\end{proof}

\bibliography{subsample}
\bibliographystyle{icml2013}

\end{document}